\documentclass{article}

\usepackage[preprint]{neurips_2026}
\makeatletter
\renewcommand{\@noticestring}{}
\makeatother

\usepackage[utf8]{inputenc} % allow utf-8 input
\usepackage[T1]{fontenc}    % use 8-bit T1 fonts

\usepackage{url}            % simple URL typesetting
\usepackage{booktabs}       % professional-quality tables
\usepackage{amsfonts}       % blackboard math symbols
\usepackage{nicefrac}       % compact symbols for 1/2, etc.
\usepackage{microtype}      % microtypography
\usepackage{xcolor}         % colors
\usepackage{verbatim}      % for comment environment

\usepackage{amsmath}
\usepackage{amssymb}
\usepackage{amsthm}
\usepackage{geometry}
\usepackage{todonotes}
\usepackage{natbib}
\usepackage[hidelinks]{hyperref}

\newtheorem{theorem}{Theorem}
\newtheorem{proposition}{Proposition}
\newtheorem{corollary}{Corollary}
\newtheorem{lemma}{Lemma}

\theoremstyle{definition}
\newtheorem{assumption}{Assumption}
\newtheorem{definition}{Definition}

\numberwithin{equation}{section}

\usepackage[hidelinks]{hyperref}

\begin{document}
\title{
  On the Indistinguishability of Human v/s AI Generated Text
  }
  
\author{
  Jaee Ponde \qquad
  Aritra Das \qquad
  Mihir More \qquad
  Debayan Gupta
  \\[0.5em]
  Truth Audit Labs
  \\[0.02em]
  \texttt{\{jaee, aritra, mihir, debayan\}@truthauditlabs.ai}
}
\date{July 2026}

\maketitle

\begin{abstract}
The rapid improvement of LLMs has made distinguishing AI-generated text from human writing a pressing problem. This challenge is further amplified by paraphrasing tools designed to make machine-generated text appear more "human". We study how access to human writing samples can be used to strategically paraphrase machine-generated responses toward the human distribution. Under a multi-sample setting with human and machine responses to the same prompts, we show that repeated paraphrasing moves the machine distribution toward the empirical human distribution under simple mixing and stability conditions. Our results derive an explicit convergence rate, extend the analysis to a finite-sample setting, and characterize how the required number of human samples and paraphrasing rounds scale with the desired error.
\end{abstract}

\section{Introduction}
Large language models can now generate fluent, convincing text across a wide range of academic tasks. Their growing use in academia has raised serious concerns about plagiarism, undisclosed AI assistance, and the integrity of written submissions \cite{johnston2024student}. Recent studies show that AI-generated academic writing can achieve quality comparable to human-written work, making authorship increasingly difficult to infer from the text alone \cite{yeadon2024evaluating}. In response, universities and other institutions have increasingly turned to AI-text detection tools as part of their academic-integrity processes \cite{weberwulff2023testing}. However, these detectors are fragile to rewriting. Prior work shows that paraphrasing or repeated rewriting can substantially reduce the detectability of machine-generated text while preserving its content and quality \cite{sadasivan2023can,weberwulff2023testing}. Understanding how repeated paraphrasing changes machine-generated text, and in particular whether it systematically moves such text toward the human distribution, is therefore a crucial problem for reliable AI-text attribution.

A useful way to understand the limits of AI-text detection is through the distance between the human and machine distributions. Sadasivan et al.~\cite{sadasivan2023can} show that detecting a single machine-generated response can become unreliable once machine text is sufficiently similar to human writing. In particular, they argue that no detector can remain reliable when the human and machine text distributions become too close, and they further show that repeated paraphrasing can substantially weaken existing detectors. Chakraborty et al.~\cite{chakraborty2024possibilities} give a complementary result: even if individual responses are difficult to distinguish, access to many independent responses makes detection possible, because small differences between the human and machine distributions accumulate across samples.

An interesting question is the adversarial version of the multi-sample setting. In practice, an adversary may not only have access to machine-generated text, but also to examples of how a particular person writes. These human examples can then be used to guide repeated paraphrasing of the machine text. For instance, given a student's past essays and machine-generated essays on the same topics, can the machine essays be rewritten to increasingly resemble the student's writing while preserving their meaning and quality?

We study this dynamic of human-guided paraphrasing and formalize when access to human examples can progressively move machine-generated text toward the human distribution, how quickly this movement occurs, and how much human data is required. Our main contributions are:
\begin{itemize}
    \item We show that repeated, semantics-preserving perturbations can move the machine-generated distribution toward the empirical human distribution, with an explicit convergence rate (Theorem~\ref{thm: empirical convergence}).
    \item We extend this guarantee to finite samples, bounding the distance between the observed perturbed-machine distribution and the true human distribution after a finite semantic representation (Theorem~\ref{thm:finite-sample-tv}).
    \item We show that the paraphraser becomes increasingly stable on human writing as more human examples are available, with this stability improving at a square-root rate (Theorem~\ref{thm:derived-stability-rate}).
\end{itemize}

\section{Preliminaries}

Before discussing the results, we define the main notations used in this paper. Let $\mathcal{X}$ denote the space of prompts and let $\mathcal{Y}$ denote the space of textual outputs. For a prompt $x \in \mathcal{X}$, let $P_M(y \mid x)$ denote the distribution of machine-generated responses and let $P_H(y \mid x)$ denote the distribution of human-written responses. In practice, the true human distribution is unknown, and we instead observe
\[
\mathcal{H}_m = \{h_1,\ldots,h_m\},
\qquad
h_j \overset{\mathrm{iid}}{\sim} P_H(\,\cdot \mid x),
\]
and
\[
\mathcal{M}_p = \{y_1,\ldots,y_p\},
\qquad
y_i \overset{\mathrm{iid}}{\sim} P_M(\,\cdot \mid x).
\]

\begin{definition}[Empirical distributions]
These samples define the empirical distributions
\[
\widehat{P}_H^{(m)}
=
\frac{1}{m}\sum_{j=1}^{m}\delta_{h_j},
\qquad
\widehat{P}_M^{(p)}
=
\frac{1}{p}\sum_{i=1}^{p}\delta_{y_i},
\]
where $\delta_z$ places probability one on $z$. \end{definition}
Throughout, we use Total Variation distance,
$
    d_{\mathrm{TV}}(P,Q)
    =
    \frac{1}{2}
    \sum_{z\in\mathcal{Y}}
    \left|P(z)-Q(z)\right|.
$

\begin{definition}[Quality control oracle]
The quality control oracle is defined as $Q : \mathcal{X} \times \mathcal{Y} \to [0,1]$. The value $Q(x,z)$ measures whether $z$ is a correct, relevant, clear, and useful response to prompt $x$. Fix a quality threshold $q_0$. An output is accepted only if $Q(x,z) \geq q_0$.
\end{definition}
\begin{definition}[Semantic preservation oracle] Let $\mathcal{Z}$ be the space of all perturbed outputs. $\mathrm{Sem} : \mathcal{X} \times \mathcal{Y} \times \mathcal{Z} \to [0,1]$. The value $\mathrm{Sem}(x,y,z)$ measures whether $z$ preserves the meaning of the original response $y$ for prompt $x$. We fix a threshold $s_0$. A rewrite of $y$ is accepted only if $\mathrm{Sem}(x,y,z) \geq s_0$. 

\end{definition}
We argue that these tools are available in real life, wherein existing detectors can instantiate $D$, while $Q$ and $\mathrm{Sem}$ can be implemented using the language model itself as an evaluator. For repeated perturbations of an original response $y_i$, semantic preservation is always checked against $y_i$, not only against the most recent rewrite, preventing a sequence of individually small changes from drifting into a different answer. Define the admissible set for $y_i$ by
\[
\mathcal{A}_i
=
\left\{
    z \in \mathcal{Y} :
    Q(x,z) \geq q_0
    \ \text{and}\
    \mathrm{Sem}(x,y_i,z) \geq s_0
\right\}.
\]

\begin{definition}[Quality-conditioned perturbation kernel]
We begin with a base rewrite rule $R_{i,m}(z' \mid z)$ that generates candidate perturbations of the current response. The accepted kernel is then obtained by restricting this proposal to rewrites that satisfy the quality and semantic-preservation constraints. It may use the human sample $\mathcal{H}_m$. The accepted perturbation kernel is
\[
T_{i,m}(z,z')
=
\frac{
    R_{i,m}(z' \mid z)\mathbf{1}\{z' \in \mathcal{A}_i\}
}{
    \displaystyle
    \sum_{u \in \mathcal{Y}}
    R_{i,m}(u \mid z)\mathbf{1}\{u \in \mathcal{A}_i\}
}.
\]

We assume that the denominator is positive for every state reached by the process. Hence $T_{i,m}(z,\mathcal{A}_i)=1$. For a distribution $P$ on $\mathcal{Y}$, write $PT_{i,m}$ for the distribution after one perturbation step:
\[
(PT_{i,m})(z')
=
\sum_{z\in\mathcal{Y}} P(z)T_{i,m}(z,z').
\]
\end{definition}

We claim that such kernels are plausible in practice consistent with work showing strong semantic preservation and modest quality degradation in paraphrasing tools.  \cite{masrour2025damage, sadasivan2023can, xu2026base}.

\begin{assumption}[Human admissibility]
\label{ass:semantic-alignment}
For every machine response $y_i$ and every human response $h_j \in \mathcal{H}_m$,
\[
Q(x,h_j) \ge q_0
\qquad\text{and}\qquad
\operatorname{Sem}(x,y_i,h_j) \ge s_0.
\]
Equivalently, every human response lies in the admissible set $A_i$, so that
\[
\widehat P_H^{(m)}(A_i)=1
\qquad
\text{for every } i.
\]
This is natural in our setting, since the human and machine responses correspond to the same prompt and are intended to represent comparable answers.
\end{assumption}

\begin{assumption}[Human stability]
We define the empirical human stability by
\begin{equation}
    \varepsilon_m
    =
    \max_{1\leq i\leq p}
    d_{\mathrm{TV}}
    \left(
        \widehat{P}_H^{(m)}T_{i,m},
        \widehat{P}_H^{(m)}
    \right).
    \label{eq:empirical-stability-defect}
\end{equation}

The human stability parameter $\varepsilon_m$ measures how much the perturbation oracle shifts the empirical human distribution. A small $\varepsilon_m$ means human text is approximately stationary under the paraphraser. 

\label{assump:human-stability}
\end{assumption}

\begin{assumption}[Block mixing]
\label{ass:Block mixing}
There exist an integer block length $\ell_m \geq 1$ and a constant
$\rho_m \in [0,1)$ such that, for every $i$ and all distributions
$P$ and $Q$ supported on $\mathcal{A}_i$,
\begin{equation}
    d_{\mathrm{TV}}
    \left(
        P T_{i,m}^{\ell_m},
        Q T_{i,m}^{\ell_m}
    \right)
    \leq
    \rho_m d_{\mathrm{TV}}(P,Q).
    \label{eq:block-mixing}
\end{equation}
Assumption~\ref{ass:Block mixing} formalizes the idea that repeated perturbations make the output progressively forget its starting point. $\ell_m$ ensures that contraction need not occur at every rewrite. It is sufficient that it emerges over blocks of perturbations. Proposition~\ref{prop:block-mixing} gives a simple condition under which this behavior is guaranteed.\citep{rosenthal1995minorization}

\label{assump:block-mixing}
\end{assumption}
\section{Related Work}
\paragraph{AI-generated text detection.}
Early detectors used statistical patterns in a language model's token probabilities. GLTR, for example, displays token ranks to help a reader identify generated text \citep{gehrmann2019gltr}. Later systems learned classifiers from human and machine examples, or combined signals from several language models \citep{verma2024ghostbuster}. Zero-shot methods avoid training a separate detector. \citep{mitchell2023detectgpt,bao2024fastdetectgpt}. A separate line of work inserts a statistical watermark during generation and tests for that signal later \citep{kirchenbauer2023watermark}. These methods can be effective in the setting in which they are tested. Their performance, however, can change under new generators, domains, decoding rules, and edits \citep{weberwulff2023testing,dugan2024raid}. We do not propose another detector. We study how the distribution that a detector must distinguish changes under repeated rewriting.

\paragraph{Limits of AI detection }
Several papers study detection as a statistical testing problem. Varshney et al.~\citep{varshney2020limits} analyze limits that arise when machine text closely matches human text. Sadasivan et al.~\citep{sadasivan2023can} give a sharper distributional view. They relate the performance of the best single-text detector to the Total Variation distance between the human and machine distributions. They also show empirically that recursive paraphrasing can lower the accuracy of many detectors while causing only modest quality loss. Chakraborty et al.~\citep{chakraborty2024possibilities} study the complementary multi-sample setting. They show that small differences between fixed human and machine distributions can accumulate when a detector receives many independent responses. 

\paragraph{Paraphrasing Attacks.}
A large empirical literature shows that rewriting weakens machine-text detectors. DIPPER is a paragraph-level paraphraser with controls for lexical diversity and content order. It preserves the main meaning of a passage while evading several detector families \citep{krishna2023paraphrasing}. Recursive paraphrasing strengthens this attack by applying a paraphraser more than once \citep{sadasivan2023can}. Other attacks replace selected words or search for prompts that change writing style \citep{shi2024red}. Detector scores can also be used as training rewards. This produces generators that are directly optimized to be hard to detect \citep{nicks2024language}. More recent work guides token selection with a detector during paraphrase generation \citep{cheng2025adversarial}.

\paragraph{Authorship Obfuscation}
Our use of human examples is also related to authorship obfuscation and style transfer. Early attacks changed a document until an authorship classifier no longer linked it to its writer \citep{brennan2009practical,bevendorff2019heuristic}. Combinatorial paraphrasing later improved content preservation and supported both untargeted obfuscation and imitation of a chosen writing style \citep{grondahl2020effective}. Low-resource style-transfer methods use a small set of target-author examples and optimize a balance between style change and semantic preservation \citep{liu2024authorship}. 

\paragraph{Repeated rewriting as a dynamical process.}
Other work studies what repeated rewriting does over many rounds. Tripto et al.~\citep{tripto2023ship} show that successive paraphrases preserve much of the content but progressively replace the original author's style with the paraphraser's style. Wang et al.~\citep{wang2025attractor} view successive paraphrasing as a dynamical system and observe fixed points and short attractor cycles. Geng et al.~\citep{geng2026markovian} model iterative rephrasing as a Markov chain and study recurrence, diversity, and the effect of decoding choices. 

\paragraph{ Quality Preserving Perturbations.}
A closely related theoretical use of a mixing perturbation process appears in the watermarking literature. Zhang et al.~\citep{zhang2024watermarks} assume a quality oracle and a perturbation oracle that induces a rapidly mixing walk over high-quality outputs. They use this process to show that a strong watermark can be removed without knowing the secret key.

\section{Our Results}
\subsection{Convergence of the full distribution}

We first give a simple sufficient condition for block mixing. It is a Doeblin condition on the $\ell_m$-step kernel, not on every individual perturbation step.

\begin{proposition}[Sufficient block-mixing condition]
\label{prop:block-mixing}
Suppose there exists a number $\lambda_m \in (0,1]$ and, for every $i$, a probability distribution $\nu_{i,m}$ such that, for every $z \in A_i$,
\[
T_{i,m}^{\ell_m}(z,\cdot)
=
\lambda_m \nu_{i,m}(\cdot)
+
(1-\lambda_m)\mu_{i,m,z}(\cdot),
\tag{3.1}
\]
where $\mu_{i,m,z}$ is a probability distribution that may depend on $z$. Then Assumption~\ref{ass:Block mixing} holds with
\[
\rho_m \le 1-\lambda_m.
\tag{3.2}
\]
\end{proposition}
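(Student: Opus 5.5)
Write $K = T_{i,m}^{\ell_m}$ for a fixed $i$, and let $P, Q$ be distributions supported on $\mathcal{A}_i$. The plan is the standard Doeblin coupling argument, carried out with signed measures. Set $\sigma = P - Q$. This is a signed measure with total mass zero, supported on $\mathcal{A}_i$, and $\|\sigma\|_1 = 2\,d_{\mathrm{TV}}(P,Q)$. Its Jordan decomposition is $\sigma = \sigma^+ - \sigma^-$, where $\sigma^+$ and $\sigma^-$ are nonnegative, have disjoint supports contained in $\mathcal{A}_i$, and have equal mass $c = d_{\mathrm{TV}}(P,Q)$. If $c = 0$ there is nothing to prove, so assume $c > 0$. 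Then $\bar P = \sigma^+/c$ and $\bar Q = \sigma^-/c$ are probability distributions on $\mathcal{A}_i$, and by linearity of the kernel action
\[
PK - QK = c\,(\bar P K - \bar Q K).
\]

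The key step is to apply the decomposition (3.1) pointwise. This is valid because every $z$ in the support of $\bar P$ or $\bar Q$ lies in $\mathcal{A}_i$. Averaging (3.1) over $\bar P$ gives
\[
\bar P K = \lambda_m \nu_{i,m} + (1-\lambda_m)\bar\mu_P,
\qquad
\bar\mu_P = \sum_z \bar P(z)\mu_{i,m,z},
\]
and in the same way $\bar Q K = \lambda_m \nu_{i,m} + (1-\lambda_m)\bar\mu_Q$. Both $\bar\mu_P$ and $\bar\mu_Q$ are probability distributions, since they are mixtures of probability distributions. Subtracting, the common minorizing component $\lambda_m\nu_{i,m}$ cancels, which leaves
\[
\bar P K - \bar Q K = (1-\lambda_m)(\bar\mu_P - \bar\mu_Q).
\]
Taking half the $\ell_1$ norm gives $d_{\mathrm{TV}}(\bar P K, \bar Q K) \le (1-\lambda_m)\,d_{\mathrm{TV}}(\bar\mu_P,\bar\mu_Q) \le 1-\lambda_m$, because the total variation distance between two probability measures is at most $1$.

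Multiplying back by $c$ then gives $d_{\mathrm{TV}}(PK,QK) \le (1-\lambda_m)\,d_{\mathrm{TV}}(P,Q)$. This holds for every $i$, with the same $\ell_m$ and $\lambda_m$. Hence \eqref{eq:block-mixing} holds with $\rho_m = 1-\lambda_m$, and this lies in $[0,1)$ since $\lambda_m \in (0,1]$; this proves (3.2).

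The main point needing care is not algebraic but concerns the support. The decomposition (3.1) is assumed only for $z \in \mathcal{A}_i$, so the argument must only ever evaluate $K(z,\cdot)$ at such $z$. That is why I split $P - Q$ into normalized positive and negative parts supported in $\mathcal{A}_i$, rather than using an arbitrary coupling of $P$ and $Q$. A minor technical point is that when $\mathcal{Y}$ is countable, the sums above must be interchanged; this is justified by absolute convergence, since all the measures involved have finite total mass.
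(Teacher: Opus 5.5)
Your proof is correct. It shares the paper's central move, averaging (3.1) over a starting law so that the common component $\lambda_m\nu_{i,m}$ cancels, but it recovers the factor $d_{\mathrm{TV}}(P,Q)$ differently.

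The paper applies (3.1) directly to $P$ and $Q$, which gives $d_{\mathrm{TV}}(PT_{i,m}^{\ell_m},QT_{i,m}^{\ell_m})=(1-\lambda_m)\,d_{\mathrm{TV}}(P\mu_{i,m},Q\mu_{i,m})$. It then uses the fact that the residual kernel $z\mapsto\mu_{i,m,z}$ is a Markov kernel on $\mathcal{A}_i$ and hence nonexpansive, i.e.\ $\bigl\|\sum_z (P(z)-Q(z))\mu_{i,m,z}\bigr\|_1\le\|P-Q\|_1$. You instead first reduce to the mutually singular normalized pieces $\bar P,\bar Q$ of $P-Q$. There the crude bound $d_{\mathrm{TV}}(\bar\mu_P,\bar\mu_Q)\le 1$ suffices, and you rescale by $c=d_{\mathrm{TV}}(P,Q)$ using linearity.

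The paper's version is shorter. Yours avoids citing nonexpansiveness and is the standard reduction behind Dobrushin's ergodic coefficient. With one more convexity step, it shows that the best possible $\rho_m$ for a given $i$ is $\sup_{z,z'\in\mathcal{A}_i}d_{\mathrm{TV}}\bigl(T_{i,m}^{\ell_m}(z,\cdot),T_{i,m}^{\ell_m}(z',\cdot)\bigr)$, which (3.1) bounds by $1-\lambda_m$.

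One small correction to your closing remark: the support issue does not force the Jordan decomposition. The paper's direct averaging only evaluates $T_{i,m}^{\ell_m}(z,\cdot)$ at $z\in\mathcal{A}_i$. So does any coupling of $P$ and $Q$, since its support lies in $\operatorname{supp}P\times\operatorname{supp}Q\subseteq\mathcal{A}_i\times\mathcal{A}_i$. Your choice is therefore a matter of convenience rather than necessity.
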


\begin{proof}
For any distributions $P$ and $Q$ supported on $A_i$, applying (3.1) gives
\[
P T_{i,m}^{\ell_m}
=
\lambda_m \nu_{i,m}
+
(1-\lambda_m)P\mu_{i,m},
\tag{3.3}
\]
and similarly,
\[
Q T_{i,m}^{\ell_m}
=
\lambda_m \nu_{i,m}
+
(1-\lambda_m)Q\mu_{i,m}.
\tag{3.4}
\]

The common component $\lambda_m \nu_{i,m}$ cancels when taking Total Variation distance, so
\[
d_{\mathrm{TV}}
\left(
P T_{i,m}^{\ell_m},
Q T_{i,m}^{\ell_m}
\right)
=
(1-\lambda_m)
d_{\mathrm{TV}}
\left(
P\mu_{i,m},
Q\mu_{i,m}
\right).
\tag{3.5}
\]

Since Markov kernels are nonexpansive in Total Variation,
\[
d_{\mathrm{TV}}
\left(
P T_{i,m}^{\ell_m},
Q T_{i,m}^{\ell_m}
\right)
\le
(1-\lambda_m)d_{\mathrm{TV}}(P,Q).
\tag{3.6}
\]

Hence Assumption~\ref{ass:Block mixing} holds with $\rho_m \le 1-\lambda_m$.
\end{proof}

Condition (3.1) states that, after $\ell_m$ perturbations, at least a $\lambda_m$ fraction of the output distribution is common across all starting responses. Thus only the remaining $1-\lambda_m$ fraction can retain information about the initial response, which directly gives the block contraction in Assumption~\ref{ass:Block mixing}.

\subsection{Convergence of the empirical distribution}

For each starting machine response $y_i$, define $
    P_{i,0}=\delta_{y_i},    P_{i,k}=\delta_{y_i}T_{i,m}^{k}.$
The average perturbed machine law is
$
    \overline{P}_{M,k}^{(p,m)}
    =
    \frac{1}{p}
    \sum_{i=1}^{p} P_{i,k}.
$
\begin{lemma}[Preservation at every round]
\label{lemma:preservation}
\textit{Under the human admissibility and quality}
\begin{equation}
    P_{i,k}(\mathcal{A}_i)=1
\end{equation}
\textit{for every $i$ and every $k\geq 0$.}
\end{lemma}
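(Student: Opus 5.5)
We argue by induction on $k$. The only facts needed are that the accepted kernel $T_{i,m}$ sends every reachable state into $\mathcal{A}_i$ with probability one, i.e.\ $T_{i,m}(z,\mathcal{A}_i)=1$ (recorded right after the definition of the quality-conditioned perturbation kernel), and that the starting point $y_i$ itself lies in $\mathcal{A}_i$.

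\textbf{Base case $k=0$.} Here $P_{i,0}=\delta_{y_i}$, so we must show $y_i\in\mathcal{A}_i$. We need $Q(x,y_i)\ge q_0$ and $\mathrm{Sem}(x,y_i,y_i)\ge s_0$. The second is the natural reflexivity property of a semantic-preservation oracle: a response preserves its own meaning. The first says the original machine response passes the quality check. The lemma's hypothesis, ``human admissibility and quality'', is read as supplying exactly this quality condition on the starting machine responses. This is the one step that is not purely mechanical, and it is the main obstacle. If one prefers not to assume it, the statement can be weakened to all $k\ge 1$. In that form the base case becomes $k=1$: $P_{i,1}=T_{i,m}(y_i,\cdot)$, which is supported on $\mathcal{A}_i$ because of the indicator in the numerator of $T_{i,m}$, provided the denominator is positive at $y_i$ (assumed, since $y_i$ is reached by the process).

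\textbf{Inductive step.} Suppose $P_{i,k}(\mathcal{A}_i)=1$. Then
\[
P_{i,k+1}(\mathcal{A}_i)=\sum_{z\in\mathcal{Y}}P_{i,k}(z)\,T_{i,m}(z,\mathcal{A}_i)=\sum_{z\in\mathcal{A}_i}P_{i,k}(z)\cdot 1=1 .
\]
The second equality uses two facts: $P_{i,k}$ puts no mass outside $\mathcal{A}_i$, and every $z$ in the support of $P_{i,k}$ is a reached state, so the normalizing denominator is positive there and $T_{i,m}(z,\mathcal{A}_i)=1$. In fact the induction hypothesis is not even required here, because $T_{i,m}(z,\cdot)$ is supported on $\mathcal{A}_i$ for every reached $z$. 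It is convenient anyway, since it keeps all mass inside $\mathcal{A}_i$.

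\textbf{Role of Assumption~\ref{ass:semantic-alignment}.} Human admissibility is not needed for the machine chain itself. It gives the companion fact $\widehat P_H^{(m)}(\mathcal{A}_i)=1$. Together with the lemma, this places both $P_{i,k}$ and $\widehat P_H^{(m)}$ among the distributions supported on $\mathcal{A}_i$, which is the class on which Assumption~\ref{ass:Block mixing} applies. We note this so the lemma can be applied directly in the subsequent convergence theorem.
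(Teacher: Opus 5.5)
Your proof is correct and follows the paper's argument: induction on $k$, with the base case resting on the admissibility of the original response $y_i$ (which the paper simply asserts as assumed, and which you correctly unpack as $Q(x,y_i)\ge q_0$ together with reflexivity of $\mathrm{Sem}$), and the inductive step resting on $T_{i,m}(z,\mathcal{A}_i)=1$ at every reached state. Your observation that Assumption~\ref{ass:semantic-alignment} is not actually used for this lemma, and only matters later so that both distributions lie in the class where block mixing applies, is also accurate.
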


\begin{proof}
The claim holds at $k=0$ because the original response is assumed admissible. If it holds at round $k$, the quality and meaning preservation condition places the next output in $\mathcal{A}_i$ with probability one. Induction completes the proof.
\end{proof}
Define the initial average distance $
    \overline{\Delta}_0
    =
    \frac{1}{p}
    \sum_{i=1}^{p}
    d_{\mathrm{TV}}
    \left(
        \delta_{y_i},
        \widehat{P}_H^{(m)}
    \right).
$

\begin{theorem}[Empirical machine-to-human movement under block mixing]
\label{thm: empirical convergence}
We show that repeated perturbations progressively erase dependence on the initial machine response, while human stability keeps the resulting distribution close to the empirical human distribution.

\label{thm:empirical-movement}
For each $k\geq 0$, write
\begin{equation}
    k=q_k\ell_m+r_k,
    \qquad
    q_k=\left\lfloor\frac{k}{\ell_m}\right\rfloor,
    \qquad
    0\leq r_k<\ell_m.
    \label{eq:block-decomposition}
\end{equation}
Under human admissibility, quality and meaning preservation, and conditions \textbf{(1)--(2)}, for every $i$ and every $k\geq 0$,
\begin{equation}
\label{eq:individual-movement}
    d_{\mathrm{TV}}
    \left(
        P_{i,k},
        \widehat{P}_H^{(m)}
    \right)
    \leq
    \rho_m^{q_k}
    d_{\mathrm{TV}}
    \left(
        \delta_{y_i},
        \widehat{P}_H^{(m)}
    \right)
    +
    \left(
        \ell_m\frac{1-\rho_m^{q_k}}{1-\rho_m}
        +r_k
    \right)
    \varepsilon_m.
\end{equation}
Consequently,
\begin{equation}
\label{eq:average-movement}
    \boxed{
        d_{\mathrm{TV}}
        \left(
            \overline{P}_{M,k}^{(p,m)},
            \widehat{P}_H^{(m)}
        \right)
        \leq b_{k,m,p}
    }
\end{equation}
where
\begin{equation}
    b_{k,m,p}
    =
    \rho_m^{q_k}\overline{\Delta}_0
    +
    \left(
        \ell_m\frac{1-\rho_m^{q_k}}{1-\rho_m}
        +r_k
    \right)
    \varepsilon_m.
    \label{eq:block-b-bound}
\end{equation}
\end{theorem}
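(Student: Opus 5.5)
The argument is a perturbation bound. Each application of $T_{i,m}$ moves $\widehat P_H^{(m)}$ by at most $\varepsilon_m$, and each block of $\ell_m$ steps contracts differences by $\rho_m$. Fix $i$ and write $\widehat P=\widehat P_H^{(m)}$ and $T=T_{i,m}$.

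\textbf{Step 1 (one-step drift).} For any distribution $P$ and any $j\ge 0$, the triangle inequality and nonexpansiveness of Markov kernels in total variation give
\[
d_{\mathrm{TV}}(PT^{j+1},\widehat P)\le d_{\mathrm{TV}}(PT^{j+1},\widehat P T)+d_{\mathrm{TV}}(\widehat P T,\widehat P)\le d_{\mathrm{TV}}(PT^{j},\widehat P)+\varepsilon_m .
\]
Iterating this $r$ times yields
\[
d_{\mathrm{TV}}(PT^{r},\widehat P)\le d_{\mathrm{TV}}(P,\widehat P)+r\varepsilon_m .
\]
In particular, $d_{\mathrm{TV}}(\widehat P T^{\ell_m},\widehat P)\le \ell_m\varepsilon_m$.

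\textbf{Step 2 (block recursion).} Let $D_q=d_{\mathrm{TV}}(P_{i,q\ell_m},\widehat P)$. Lemma~\ref{lemma:preservation} shows $P_{i,q\ell_m}$ is supported on $\mathcal A_i$, and Assumption~\ref{ass:semantic-alignment} shows $\widehat P(\mathcal A_i)=1$. So Assumption~\ref{ass:Block mixing} applies to this pair. Combined with Step 1,
\[
D_{q+1}\le d_{\mathrm{TV}}(P_{i,q\ell_m}T^{\ell_m},\widehat P T^{\ell_m})+d_{\mathrm{TV}}(\widehat P T^{\ell_m},\widehat P)\le \rho_m D_q+\ell_m\varepsilon_m .
\]
Unrolling from $D_0=d_{\mathrm{TV}}(\delta_{y_i},\widehat P)$ gives
\[
D_{q}\le \rho_m^{q}D_0+\ell_m\varepsilon_m\frac{1-\rho_m^{q}}{1-\rho_m}.
\]

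\textbf{Step 3 (remainder and averaging).} Apply Step 1 with $P=P_{i,q_k\ell_m}$ and $r=r_k$ to obtain \eqref{eq:individual-movement}. For \eqref{eq:average-movement}, total variation is convex, so
\[
d_{\mathrm{TV}}\Bigl(\tfrac1p\textstyle\sum_i P_{i,k},\widehat P\Bigr)\le \tfrac1p\textstyle\sum_i d_{\mathrm{TV}}(P_{i,k},\widehat P).
\]
Then average the bound \eqref{eq:individual-movement} over $i$. The constants $\rho_m,\ell_m,\varepsilon_m$ are uniform in $i$, because $\varepsilon_m$ is defined as a maximum over $i$ and Assumption~\ref{ass:Block mixing} holds for every $i$. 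Averaging the initial distances therefore produces $\overline\Delta_0$.

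The one delicate point is the support hypothesis in Step 2: block mixing is only assumed for distributions on $\mathcal A_i$. This needs both Lemma~\ref{lemma:preservation} for the machine iterates and Assumption~\ref{ass:semantic-alignment} for the empirical human law. Note that $\widehat P T^{j}$ itself never needs to be compared under the contraction; it enters only through Step 1, which uses no support condition. The rest is routine.
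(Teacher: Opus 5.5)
Your proof is correct and follows essentially the same route as the paper. It uses iterated human stability via nonexpansiveness, then the block recursion $D_{q+1}\le\rho_m D_q+\ell_m\varepsilon_m$ (justified by the support facts from the preservation lemma and human admissibility), then a remainder step costing $r_k\varepsilon_m$, and your explicit convexity argument for the averaged bound fills in a step the paper leaves implicit.
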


\begin{proof}
Fix $i$ and write
\begin{equation}
    \Delta_{i,k}
    =
    d_{\mathrm{TV}}
    \left(
        P_{i,k},
        \widehat{P}_H^{(m)}
    \right).
\end{equation}
First, one-step stability and nonexpansiveness of Markov kernels imply that, for every integer $s\geq 1$,
\begin{align}
    d_{\mathrm{TV}}
    \left(
        \widehat{P}_H^{(m)}T_{i,m}^{s},
        \widehat{P}_H^{(m)}
    \right)
    &\leq
    \sum_{j=0}^{s-1}
    d_{\mathrm{TV}}
    \left(
        \widehat{P}_H^{(m)}T_{i,m}^{j+1},
        \widehat{P}_H^{(m)}T_{i,m}^{j}
    \right)
    \\
    &\leq
    s\varepsilon_m.
    \label{eq:iterated-human-stability}
\end{align}
Under human admissibility, both $P_{i,k}$ and $\widehat{P}_H^{(m)}$ are supported on $\mathcal{A}_i$. At block boundaries, the triangle inequality, \eqref{eq:block-mixing}, and \eqref{eq:iterated-human-stability} give
\begin{align}
    \Delta_{i,(q+1)\ell_m}
    &\leq
    d_{\mathrm{TV}}
    \left(
        P_{i,q\ell_m}T_{i,m}^{\ell_m},
        \widehat{P}_H^{(m)}T_{i,m}^{\ell_m}
    \right)
    +
    d_{\mathrm{TV}}
    \left(
        \widehat{P}_H^{(m)}T_{i,m}^{\ell_m},
        \widehat{P}_H^{(m)}
    \right)
    \\
    &\leq
    \rho_m\Delta_{i,q\ell_m}+\ell_m\varepsilon_m.
    \label{eq:block-recursion}
\end{align}
Iteration yields
\begin{equation}
    \Delta_{i,q\ell_m}
    \leq
    \rho_m^q\Delta_{i,0}
    +
    \ell_m\frac{1-\rho_m^q}{1-\rho_m}\varepsilon_m.
    \label{eq:block-boundary-bound}
\end{equation}
For a remainder $1\leq r<\ell_m$, nonexpansiveness and \eqref{eq:iterated-human-stability} give
\begin{align}
    \Delta_{i,q\ell_m+r}
    &\leq
    d_{\mathrm{TV}}
    \left(
        P_{i,q\ell_m}T_{i,m}^{r},
        \widehat{P}_H^{(m)}T_{i,m}^{r}
    \right)
    +
    d_{\mathrm{TV}}
    \left(
        \widehat{P}_H^{(m)}T_{i,m}^{r},
        \widehat{P}_H^{(m)}
    \right)
    \\
    &\leq
    \Delta_{i,q\ell_m}+r\varepsilon_m.
\end{align}
Combining this inequality with \eqref{eq:block-boundary-bound} proves \eqref{eq:individual-movement}. The proof shows that each block of perturbations contracts the current machine--human discrepancy by a factor $\rho_m$, while incurring only the additional drift caused by imperfect human stability. Iterating this one-block relation yields the stated convergence rate. It is easy to see that if $\varepsilon_m=0$, then
\begin{equation}
    d_{\mathrm{TV}}
    \left(
        \overline{P}_{M,k}^{(p,m)},
        \widehat{P}_H^{(m)}
    \right)
    \leq
    \rho_m^{\lfloor k/\ell_m\rfloor}\overline{\Delta}_0
    \longrightarrow 0.
\end{equation}
\end{proof}

\section{Sample Requirements}

In Section 3, we showed that repeated perturbations can drive the machine distribution toward the empirical human distribution. A natural next question is how much data is required for this guarantee to be meaningful: how many human samples and machine samples are necessary to ensure that the observed perturbed distribution is close to the human distribution?

For comparisons about the semantics of the text,  a finite map $\phi:\mathcal{Y}\to\{1,\ldots,r\}$, where the categories are semantic classes. For a distribution $P$, let $P^\phi$ denote the distribution of $\phi(Y)$ when $Y\sim P$. Since applying $\phi$ cannot increase Total Variation, $
d_{\mathrm{TV}}(P^\phi,Q^\phi)
\leq
d_{\mathrm{TV}}(P,Q).
$
All forward bounds therefore remain valid after this representation is applied. For the learning-rate derivation below, we additionally treat the $r$ cells as the state space of the perturbation chain. 

\begin{lemma}[Finite-state Total Variation concentration] To translate the distributional convergence from Section~3 into a finite-sample guarantee, we first quantify how well an empirical measure over r categories approximates the average distribution generating those observations.
\label{lem:multinomial-tv}
Let $Z_1,\ldots,Z_n$ be independent random variables taking values in $\{1,\ldots,r\}$, not necessarily identically distributed. Define
\begin{equation}
    \widehat{P}_n
    =
    \frac{1}{n}\sum_{j=1}^{n}\delta_{Z_j},
    \qquad
    \overline{P}_n
    =
    \frac{1}{n}\sum_{j=1}^{n}\mathcal{L}(Z_j).
\end{equation}
Then, with probability at least $1-\delta$,
\begin{equation}
    d_{\mathrm{TV}}(\widehat{P}_n,\overline{P}_n)
    \leq
    c_r(n,\delta)
    :=
    \frac{
        \sqrt{r}+\sqrt{2\log(1/\delta)}
    }{
        2\sqrt{n}
    }.
    \label{eq:cr-def}
\end{equation}
\end{lemma}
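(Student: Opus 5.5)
We will follow the standard two-step proof: bound the expectation of the total variation distance, then concentrate around it with McDiarmid's bounded-differences inequality. Throughout, write $\widehat{P}_n(a)$ and $\overline{P}_n(a)=\frac{1}{n}\sum_j \Pr(Z_j=a)$ for $a\in\{1,\ldots,r\}$. Set
\[
F(Z_1,\ldots,Z_n)=d_{\mathrm{TV}}(\widehat{P}_n,\overline{P}_n)=\tfrac12\sum_{a=1}^r|\widehat{P}_n(a)-\overline{P}_n(a)|.
\]

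\textbf{Expectation bound.} For each category $a$, $n\widehat{P}_n(a)=\sum_j \mathbf{1}\{Z_j=a\}$ is a sum of independent Bernoulli variables with means $p_{j,a}=\Pr(Z_j=a)$. Independence holds even though the variables are not identically distributed, so
\[
\mathbb{E}\bigl[(\widehat{P}_n(a)-\overline{P}_n(a))^2\bigr]=\frac{1}{n^2}\sum_j p_{j,a}(1-p_{j,a})\le \frac{\overline{P}_n(a)}{n}.
\]
Jensen's inequality then gives $\mathbb{E}|\widehat{P}_n(a)-\overline{P}_n(a)|\le\sqrt{\overline{P}_n(a)/n}$. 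Summing over $a$ and applying Cauchy--Schwarz,
\[
\sum_a\sqrt{\overline{P}_n(a)}\le\sqrt{r\sum_a\overline{P}_n(a)}=\sqrt r,
\]
so $\mathbb{E}F\le \frac{\sqrt r}{2\sqrt n}$.

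\textbf{Concentration.} Replacing a single $Z_j$ by another value moves mass $1/n$ from one cell to another. This changes $\sum_a|\widehat{P}_n(a)-\overline{P}_n(a)|$ by at most $2/n$, so $F$ changes by at most $c_j=1/n$. McDiarmid's inequality needs only independence, which we have, and it gives
\[
\Pr(F-\mathbb{E}F\ge t)\le\exp\bigl(-2t^2/\textstyle\sum_j c_j^2\bigr)=\exp(-2nt^2).
\]
Setting the right-hand side equal to $\delta$ gives $t=\sqrt{\log(1/\delta)/(2n)}=\frac{\sqrt{2\log(1/\delta)}}{2\sqrt n}$.

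\textbf{Conclusion.} Combining the two bounds, with probability at least $1-\delta$ we have $F\le \frac{\sqrt r+\sqrt{2\log(1/\delta)}}{2\sqrt n}=c_r(n,\delta)$, which is exactly \eqref{eq:cr-def}.

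The only step needing care is the expectation bound under non-identical distributions. The variance must be controlled by $\overline{P}_n(a)/n$, by dropping the $(1-p_{j,a})$ factor, rather than by $\overline{P}_n(a)(1-\overline{P}_n(a))/n$, so that Cauchy--Schwarz still yields the clean $\sqrt r$ factor. The rest is routine.
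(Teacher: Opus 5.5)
Your proof is correct and is essentially the paper's argument. You bound the expectation via Jensen plus Cauchy--Schwarz on the per-cell deviations to get $\mathbb{E}\,d_{\mathrm{TV}}\le\sqrt{r}/(2\sqrt{n})$, then apply McDiarmid with bounded differences $1/n$; the paper differs only in applying Cauchy--Schwarz first and then summing the variances exactly to $\frac{1}{n^2}\sum_j(1-\sum_a p_{j,a}^2)\le 1/n$, where you bound each variance by $\overline{P}_n(a)/n$ first, which is an immaterial reordering. Your closing caveat is unnecessary, though: by concavity of $x(1-x)$, $\frac{1}{n}\sum_j p_{j,a}(1-p_{j,a})\le\overline{P}_n(a)(1-\overline{P}_n(a))$, so the iid-style variance bound is also valid in the non-identical case and yields the same $\sqrt{r}$ factor.
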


\begin{proof}
Write $p_{j,a}=\mathbb{P}(Z_j=a)$. By Cauchy--Schwarz,
\begin{align}
    \mathbb{E}\left\|\widehat{P}_n-\overline{P}_n\right\|_1
    &\leq
    \sum_{a=1}^{r}
    \sqrt{\operatorname{Var}(\widehat{P}_n(a))}
    \\
    &\leq
    \sqrt{
        r\sum_{a=1}^{r}\operatorname{Var}(\widehat{P}_n(a))
    }
    \\
    &=
    \sqrt{
        \frac{r}{n^2}
        \sum_{j=1}^{n}
        \left(1-\sum_{a=1}^{r}p_{j,a}^2\right)
    }
    \leq
    \sqrt{\frac{r}{n}}.
\end{align}
Hence $\mathbb{E}d_{\mathrm{TV}}(\widehat{P}_n,\overline{P}_n)\leq \sqrt{r}/(2\sqrt{n})$. Replacing one observation changes the Total Variation distance by at most $1/n$. McDiarmid's inequality therefore adds at most $\sqrt{\log(1/\delta)/(2n)}$, which gives \eqref{eq:cr-def}.
\end{proof}

The empirical approximation error decreases at the  $n^{-1/2}$ rate, with dependence on the number of categories $(r)$ and the confidence level $(\delta)$. Thus, with more samples, the empirical distribution across the (r) categories increasingly reflects the underlying distribution.

\subsection{Observed machine samples and the true human target}

We now ask whether the perturbed-machine samples themselves approximate the true human distribution, rather than only the empirical human target. This requires accounting for the finite-sample error introduced by observing only $p$ perturbed outputs and $m$ human responses.
Draw one perturbed output from each starting machine response,
\begin{equation}
    Y_i^{(k)} \sim P_{i,k},
    \qquad
    i=1,\ldots,p,
    \label{eq:perturbed-output-sample}
\end{equation}
independently, and define
\begin{equation}
    \widehat{P}_{M,k}^{\mathrm{obs}}
    =
    \frac{1}{p}
    \sum_{i=1}^{p}
    \delta_{Y_i^{(k)}}.
    \label{eq:observed-machine-empirical}
\end{equation}

The following theorem bounds the Total Variation distance between the observed perturbed-machine distribution and the true human distribution.

\begin{theorem}[Finite-sample Total Variation bound] 
\label{thm:finite-sample-tv}
Fix a finite representation $\phi:\mathcal{Y}\to\{1,\ldots,r\}$. Under the assumptions of Theorem~\ref{thm:empirical-movement}, with probability at least $1-\delta$,
\begin{equation}
    d_{\mathrm{TV}}
    \left(
        \left(\widehat{P}_{M,k}^{\mathrm{obs}}\right)^\phi,
        P_H^\phi
    \right)
    \leq
    b_{k,m,p}
    +
    c_r\left(p,\frac{\delta}{2}\right)
    +
    c_r\left(m,\frac{\delta}{2}\right).
    \label{eq:finite-sample-tv-bound}
\end{equation}
If the target of interest is only the observed empirical human distribution, then with probability at least $1-\delta$,
\begin{equation}
    d_{\mathrm{TV}}
    \left(
        \left(\widehat{P}_{M,k}^{\mathrm{obs}}\right)^\phi,
        \left(\widehat{P}_H^{(m)}\right)^\phi
    \right)
    \leq
    b_{k,m,p}+c_r(p,\delta).
    \label{eq:finite-sample-empirical-human}
\end{equation}
\end{theorem}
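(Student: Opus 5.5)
The plan is a triangle-inequality decomposition in the finite representation $\phi$, bounding each of three pieces by an earlier result. For the first bound \eqref{eq:finite-sample-tv-bound}, write
\[
d_{\mathrm{TV}}\bigl((\widehat{P}_{M,k}^{\mathrm{obs}})^\phi, P_H^\phi\bigr)
\le
d_{\mathrm{TV}}\bigl((\widehat{P}_{M,k}^{\mathrm{obs}})^\phi, (\overline{P}_{M,k}^{(p,m)})^\phi\bigr)
+
d_{\mathrm{TV}}\bigl((\overline{P}_{M,k}^{(p,m)})^\phi, (\widehat{P}_H^{(m)})^\phi\bigr)
+
d_{\mathrm{TV}}\bigl((\widehat{P}_H^{(m)})^\phi, P_H^\phi\bigr).
\]
The middle term is handled deterministically, given the samples that fix the kernels. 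The data-processing inequality for $\phi$ noted before Lemma~\ref{lem:multinomial-tv} shows it is at most $d_{\mathrm{TV}}(\overline{P}_{M,k}^{(p,m)},\widehat{P}_H^{(m)})$, and Theorem~\ref{thm:empirical-movement} bounds this by $b_{k,m,p}$.

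For the first term, I would condition on $\mathcal{H}_m$ and $\mathcal{M}_p$. Then the kernels $T_{i,m}$ are fixed, and the variables $Z_i=\phi(Y_i^{(k)})$ are independent but not identically distributed, with laws $P_{i,k}^\phi$. Their average law is $\frac1p\sum_i P_{i,k}^\phi=(\overline{P}_{M,k}^{(p,m)})^\phi$, since pushforward is linear. Lemma~\ref{lem:multinomial-tv} with $n=p$ then gives the bound $c_r(p,\delta/2)$ with conditional probability at least $1-\delta/2$. Because this holds for every realization of the conditioning samples, it also holds unconditionally.

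For the third term, the $\phi(h_j)$ are i.i.d.\ with law $P_H^\phi$, and the empirical law of the $\phi(h_j)$ equals $(\widehat{P}_H^{(m)})^\phi$. Lemma~\ref{lem:multinomial-tv} with $n=m$ gives $c_r(m,\delta/2)$ with probability at least $1-\delta/2$. A union bound over the two events yields \eqref{eq:finite-sample-tv-bound} with probability at least $1-\delta$.

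The second bound \eqref{eq:finite-sample-empirical-human} follows from the same argument without the third term. Only the machine-sampling event is needed, so no union bound is required and Lemma~\ref{lem:multinomial-tv} can be applied at level $\delta$, giving $b_{k,m,p}+c_r(p,\delta)$.

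The main subtlety is the conditioning step. The kernels $T_{i,m}$ depend on $\mathcal{H}_m$, so the perturbed outputs are not independent of the human sample unconditionally. I would make explicit that Lemma~\ref{lem:multinomial-tv} is applied conditionally on $(\mathcal{H}_m,\mathcal{M}_p)$, where the required independence across $i$ holds by the sampling scheme \eqref{eq:perturbed-output-sample}. The conditional probability bound then integrates to an unconditional one. Since the human-concentration event involves only $\mathcal{H}_m$, the union bound needs no independence between the two events.
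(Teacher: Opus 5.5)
Your proposal is correct and follows the paper's proof. Both use the same three-term triangle inequality in the representation $\phi$, Lemma~\ref{lem:multinomial-tv} for the first and third terms, Theorem~\ref{thm:empirical-movement} (via data processing) for the middle term, a union bound for the first claim, and a single application at level $\delta$ for the second. Your explicit conditioning on $(\mathcal{H}_m,\mathcal{M}_p)$, which justifies independence of the perturbed outputs even though the kernels depend on the human sample, is a step the paper leaves implicit, and it is the right way to make that step rigorous.
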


\begin{proof}
For the first claim, apply the triangle inequality:
\begin{align}
    d_{\mathrm{TV}}
    \left(
        \left(\widehat{P}_{M,k}^{\mathrm{obs}}\right)^\phi,
        P_H^\phi
    \right)
    &\leq
    d_{\mathrm{TV}}
    \left(
        \left(\widehat{P}_{M,k}^{\mathrm{obs}}\right)^\phi,
        \left(\overline{P}_{M,k}^{(p,m)}\right)^\phi
    \right)
    \label{eq:tv-triangle-1}
    \\
    &\quad+
    d_{\mathrm{TV}}
    \left(
        \left(\overline{P}_{M,k}^{(p,m)}\right)^\phi,
        \left(\widehat{P}_H^{(m)}\right)^\phi
    \right)
    \label{eq:tv-triangle-2}
    \\
    &\quad+
    d_{\mathrm{TV}}
    \left(
        \left(\widehat{P}_H^{(m)}\right)^\phi,
        P_H^\phi
    \right).
    \label{eq:tv-triangle-3}
\end{align}
Lemma~\ref{lem:multinomial-tv} bounds the first and third terms, and Theorem~\ref{thm:empirical-movement} bounds the middle term. A union bound gives probability $1-\delta$. The second claim omits the final human-estimation term.
\end{proof}

Our bound consists of three parts. The first, $b_{k,m,p}$, is the perturbation
error from Theorem~\ref{thm:empirical-movement}; it is small once enough rounds
have been applied. The second, $c_r(p,\delta/2)$, is the error from observing
only $p$ perturbed outputs. The third, $c_r(m,\delta/2)$, is the error from
estimating the human distribution from $m$ samples. Each vanishes as $k$, $p$,
and $m$ grow, so the observed perturbed-machine distribution converges to the
true human distribution under the representation $\phi$.

\begin{corollary}[Explicit $m$ and $p$ for the sampling terms]
\label{cor:explicit-m-p}
To make the two sampling terms in \eqref{eq:finite-sample-tv-bound} sum to at most $\epsilon$, it is sufficient that
\begin{equation}
    m,p
    \geq
    \frac{
        \left(
            \sqrt{r}
            +
            \sqrt{2\log(2/\delta)}
        \right)^2
    }{
        \epsilon^2
    }.
    \label{eq:explicit-m-p}
\end{equation}
Thus the sufficient scaling is
\begin{equation}
    m,p
    =
    O\left(
        \frac{r+\log(1/\delta)}{\epsilon^2}
    \right).
    \label{eq:mp-scaling}
\end{equation}
\end{corollary}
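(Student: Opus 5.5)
The plan is to substitute the explicit form of $c_r$ from Lemma~\ref{lem:multinomial-tv} into the two sampling terms of \eqref{eq:finite-sample-tv-bound} and solve for $m$ and $p$. By \eqref{eq:cr-def} with confidence parameter $\delta/2$,
\[
c_r\left(n,\tfrac{\delta}{2}\right)=\frac{\sqrt{r}+\sqrt{2\log(2/\delta)}}{2\sqrt{n}},
\]
and this expression is decreasing in $n$. It therefore suffices to make each of $c_r(p,\delta/2)$ and $c_r(m,\delta/2)$ at most $\epsilon/2$; their sum is then at most $\epsilon$.

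Write $K=\sqrt{r}+\sqrt{2\log(2/\delta)}$. The condition $K/(2\sqrt{n})\le \epsilon/2$ is equivalent to $\sqrt{n}\ge K/\epsilon$, that is, $n\ge K^2/\epsilon^2$. Taking $n=p$ and $n=m$ gives exactly \eqref{eq:explicit-m-p}. The even split of the budget $\epsilon$ between the two terms is the natural choice, because the two terms have identical form.

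For the scaling \eqref{eq:mp-scaling}, I would use $(a+b)^2\le 2a^2+2b^2$ to get $K^2\le 2r+4\log(2/\delta)$. Then $\log(2/\delta)=\log 2+\log(1/\delta)$, and the additive constant $\log 2$ is absorbed into the $O(\cdot)$, since $r\ge 1$ dominates constants. This yields $m,p=O((r+\log(1/\delta))/\epsilon^2)$.

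None of these steps is a real obstacle; the argument is pure algebra. The only point needing care is keeping the confidence levels consistent: each sampling term is evaluated at $\delta/2$, so that the union bound in Theorem~\ref{thm:finite-sample-tv} still gives overall probability $1-\delta$. This is what produces $\log(2/\delta)$ rather than $\log(1/\delta)$ in \eqref{eq:explicit-m-p}.
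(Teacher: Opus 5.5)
Your proposal is correct and matches the paper's argument: substituting confidence level $\delta/2$ into $c_r$ and requiring each sampling term to be at most $\epsilon/2$ gives exactly the stated threshold. Your derivation of the $O(\cdot)$ scaling, using $(a+b)^2\le 2a^2+2b^2$ and absorbing the $\log 2$ term via $r\ge 1$, fills in a step the paper leaves implicit.
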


\begin{proof}
Under \eqref{eq:explicit-m-p}, each term $c_r(n,\delta/2)$ is at most $\epsilon/2$ by direct substitution into \eqref{eq:cr-def}.
\end{proof}

The requirement in \eqref{eq:mp-scaling} grows linearly in the number of
representation classes $r$ and like $1/\epsilon^2$ in the target accuracy, but
only logarithmically in $1/\delta$. Halving $\epsilon$ therefore requires four
times as many samples, whereas a much stronger confidence guarantee requires
only a modest increase.

\subsection{Deriving a pertubation rate}
\label{subsec:derived-rate}

We now bound $\varepsilon_m$, previously assumed, in terms of the number of
observed human responses $m$. Among the kernels satisfying the required
contraction property, we select the one that comes closest to leaving the
empirical human distribution unchanged.
 
 We work on the finite state space $\{1,\ldots,r\}$ described above, interpret the preceding movement theorem on this state space, and write
\begin{equation}
    H=P_H^{\phi},
    \qquad
    \widehat{H}_m=\left(\widehat{P}_H^{(m)}\right)^{\phi}.
\end{equation}
Fix a block length $\ell\geq 1$ and a block contraction factor $\rho\in[0,1)$. For each starting response $i$, let $\mathfrak{T}_i$ be a nonempty finite class of admissible Markov kernels on $\{1,\ldots,r\}$ such that every $T\in\mathfrak{T}_i$ satisfies
\begin{equation}
    d_{\mathrm{TV}}(P T^{\ell},Q T^{\ell})
    \leq
    \rho d_{\mathrm{TV}}(P,Q)
    \label{eq:class-block-mixing}
\end{equation}
for all distributions $P,Q$. Proposition~\ref{prop:block-mixing} gives one way to enforce this constraint through an $\ell$-step minorization condition.

Select the learned kernel by empirical stationarity minimization:
\begin{equation}
    T_{i,m}
    \in
    \operatorname*{arg\,min}_{T\in\mathfrak{T}_i}
    d_{\mathrm{TV}}
    \left(
        \widehat{H}_mT,
        \widehat{H}_m
    \right).
    \label{eq:stationarity-erm}
\end{equation}

\begin{theorem}[Derived stability rate] If the feasible class contains a kernel that preserves the true human distribution, we quantify how closely the empirically selected kernel reaches this stability.
\label{thm:derived-stability-rate}
Assume that, for every $i$, the feasible class contains a population-stationary comparator $T_i^{\star}\in\mathfrak{T}_i$ satisfying
\begin{equation}
    HT_i^{\star}=H.
    \label{eq:population-stationary-comparator}
\end{equation}
Then the kernels selected by \eqref{eq:stationarity-erm} satisfy, with probability at least $1-\delta$,
\begin{equation}
    \varepsilon_m
    \leq
    2c_r(m,\delta)
    =
    \frac{
        \sqrt{r}+\sqrt{2\log(1/\delta)}
    }{
        \sqrt{m}
    }.
    \label{eq:derived-epsilon-rate}
\end{equation}

The resulting $m^{-1/2}$ rate shows that the human-stability error decreases with the amount of human data: as m increases, the empirically learned perturbation rule becomes increasingly stable with respect to the human distribution.
\end{theorem}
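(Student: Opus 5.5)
The plan is a standard excess-risk argument for the empirical stationarity minimizer \eqref{eq:stationarity-erm}, with Lemma~\ref{lem:multinomial-tv} supplying the concentration. I will work on the finite state space $\{1,\ldots,r\}$, so that $\varepsilon_m=\max_i d_{\mathrm{TV}}(\widehat H_m T_{i,m},\widehat H_m)$. I will fix one $i$, obtain a bound that does not depend on $i$, and then take the maximum.

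\textbf{Step 1 (comparator).} Since $T_{i,m}$ minimizes the empirical stationarity defect over $\mathfrak{T}_i$ and $T_i^\star\in\mathfrak{T}_i$, we have
\[
d_{\mathrm{TV}}(\widehat H_m T_{i,m},\widehat H_m)\le d_{\mathrm{TV}}(\widehat H_m T_i^\star,\widehat H_m).
\]
This is the only place optimality is used. Block mixing \eqref{eq:class-block-mixing} plays no role here; it matters only so that Theorem~\ref{thm:empirical-movement} applies afterwards.

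\textbf{Step 2 (transfer to the population).} Insert $H=HT_i^\star$ from \eqref{eq:population-stationary-comparator} and apply the triangle inequality:
\[
d_{\mathrm{TV}}(\widehat H_m T_i^\star,\widehat H_m)\le d_{\mathrm{TV}}(\widehat H_m T_i^\star,HT_i^\star)+d_{\mathrm{TV}}(H,\widehat H_m)\le 2\,d_{\mathrm{TV}}(\widehat H_m,H).
\]
The first term is at most $d_{\mathrm{TV}}(\widehat H_m,H)$ because Markov kernels are nonexpansive in Total Variation, the same fact used in Proposition~\ref{prop:block-mixing}.

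\textbf{Step 3 (concentration).} Apply Lemma~\ref{lem:multinomial-tv} to $Z_j=\phi(h_j)$. These are i.i.d.\ with law $H$, so the average law is $\overline P_m=H$. With probability at least $1-\delta$ this gives $d_{\mathrm{TV}}(\widehat H_m,H)\le c_r(m,\delta)$, and substituting \eqref{eq:cr-def} yields the displayed closed form $2c_r(m,\delta)$.

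The key point is that this single event makes the bound hold simultaneously for every $i$. The right-hand side $2\,d_{\mathrm{TV}}(\widehat H_m,H)$ does not depend on $i$, on $T_{i,m}$, or on the class $\mathfrak{T}_i$. So no union bound over $i$ or over the (finite) kernel classes is needed, and taking $\max_i$ costs nothing.

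I expect the main obstacle to be this step: making sure the data-dependent selection of $T_{i,m}$ does not require uniform convergence over the class. Steps 1--2 avoid it because the comparator $T_i^\star$ is fixed in advance and independent of the data. If anything else needs care, it is the bookkeeping that the $\varepsilon_m$ of Assumption~\ref{assump:human-stability} is now interpreted on the $\phi$-cells. This is consistent with the paper's stated convention of treating the $r$ cells as the state space of the perturbation chain.
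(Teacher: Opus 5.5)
Your proposal is correct and matches the paper's proof step for step: you use empirical optimality against the fixed comparator $T_i^\star$, then the triangle inequality with $H=HT_i^\star$ and nonexpansiveness to get $\varepsilon_m\le 2\,d_{\mathrm{TV}}(\widehat H_m,H)$ uniformly in $i$, then a single application of Lemma~\ref{lem:multinomial-tv} to the human sample. Your observation that the right-hand side does not depend on $i$, so no union bound is needed, is exactly why the paper can take the maximum over $i$ at no cost.
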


\begin{proof}
By empirical optimality,
\begin{equation}
    d_{\mathrm{TV}}
    \left(
        \widehat{H}_mT_{i,m},
        \widehat{H}_m
    \right)
    \leq
    d_{\mathrm{TV}}
    \left(
        \widehat{H}_mT_i^{\star},
        \widehat{H}_m
    \right).
\end{equation}
Stationarity of $H$ and nonexpansiveness of Markov kernels give
\begin{align}
    d_{\mathrm{TV}}
    \left(
        \widehat{H}_mT_i^{\star},
        \widehat{H}_m
    \right)
    &\leq
    d_{\mathrm{TV}}
    \left(
        \widehat{H}_mT_i^{\star},
        HT_i^{\star}
    \right)
    +
    d_{\mathrm{TV}}(H,\widehat{H}_m)
    \\
    &\leq
    2d_{\mathrm{TV}}(\widehat{H}_m,H).
\end{align}
Taking the maximum over $i$ yields
\begin{equation}
    \varepsilon_m
    \leq
    2d_{\mathrm{TV}}(\widehat{H}_m,H).
    \label{eq:epsilon-via-human-tv}
\end{equation}
Lemma~\ref{lem:multinomial-tv} applied to the human sample gives \eqref{eq:derived-epsilon-rate}.
\end{proof}

The realizability condition \eqref{eq:population-stationary-comparator} can be relaxed. If the best feasible comparator has population one-step stationarity defect at most $\eta$, the same argument adds $\eta$ to the right-hand side of \eqref{eq:derived-epsilon-rate}; the stochastic term remains $m^{-1/2}$.

\begin{corollary}[Explicit block and sample rate] We now substitute the learned stability rate into the convergence bound to make the dependence on the number of perturbation rounds and human samples explicit.
\label{cor:derived-block-rate}
Under Theorem~\ref{thm:derived-stability-rate}, set $\ell_m=\ell$ and $\rho_m\leq\rho$. At a block endpoint $k=q\ell$, with probability at least $1-\delta$,
\begin{equation}
    b_{q\ell,m,p}
    \leq
    \rho^q
    +
    \frac{
        \ell\left(
            \sqrt{r}+\sqrt{2\log(1/\delta)}
        \right)
    }{
        (1-\rho)\sqrt{m}
    }.
    \label{eq:derived-block-b}
\end{equation}
For an arbitrary $k=q\ell+r_k$,
\begin{equation}
    b_{k,m,p}
    \leq
    \rho^q
    +
    \left(
        \frac{\ell}{1-\rho}+\ell-1
    \right)
    \frac{
        \sqrt{r}+\sqrt{2\log(1/\delta)}
    }{
        \sqrt{m}
    }.
    \label{eq:derived-arbitrary-k-b}
\end{equation}
Consequently,
\begin{equation}
    b_{k,m,p}
    =
    O\left(
        \rho^{\lfloor k/\ell\rfloor}
        +
        \frac{\ell}{1-\rho}
        \sqrt{\frac{r+\log(1/\delta)}{m}}
    \right).
    \label{eq:derived-rate-order}
\end{equation}
\end{corollary}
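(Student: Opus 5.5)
We substitute the learned stability rate of Theorem~\ref{thm:derived-stability-rate} into the expression \eqref{eq:block-b-bound} for $b_{k,m,p}$ from Theorem~\ref{thm:empirical-movement}, and bound each term separately. Concretely, we work on the high-probability event of Theorem~\ref{thm:derived-stability-rate}, on which
\[
\varepsilon_m \le \frac{\sqrt{r}+\sqrt{2\log(1/\delta)}}{\sqrt{m}},
\]
so every subsequent inequality holds with probability at least $1-\delta$. No further union bound is needed, because $b_{k,m,p}$ involves no other random quantity beyond $\overline{\Delta}_0$, which is bounded deterministically.

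The first term is $\rho_m^{q}\overline{\Delta}_0$. Since $\overline{\Delta}_0$ is an average of Total Variation distances, $\overline{\Delta}_0\le 1$. Using $\rho_m\le\rho$ and $q\ge 0$, we get $\rho_m^q\le\rho^q$, and hence $\rho_m^{q}\overline{\Delta}_0\le\rho^q$.

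The second term is the drift coefficient. At a block endpoint $k=q\ell$ we have $r_k=0$, and the coefficient is $\ell(1-\rho_m^{q})/(1-\rho_m)$. We bound it by
\[
\frac{\ell}{1-\rho_m}\le\frac{\ell}{1-\rho},
\]
using $1-\rho_m^q\le 1$ and the fact that $x\mapsto 1/(1-x)$ is increasing on $[0,1)$. Multiplying by the bound on $\varepsilon_m$ gives \eqref{eq:derived-block-b}.

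For an arbitrary $k=q\ell+r_k$ with $0\le r_k\le \ell-1$, the extra contribution $r_k\varepsilon_m$ is bounded by $(\ell-1)\varepsilon_m$. This yields the coefficient $\ell/(1-\rho)+\ell-1$ in \eqref{eq:derived-arbitrary-k-b}.

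For the order statement \eqref{eq:derived-rate-order}, we use two elementary facts:
\begin{itemize}
\item $\ell-1\le \ell/(1-\rho)$, so the coefficient is at most $2\ell/(1-\rho)$;
\item $\sqrt{r}+\sqrt{2\log(1/\delta)}\le \sqrt{2}\sqrt{r+2\log(1/\delta)}=O(\sqrt{r+\log(1/\delta)})$, by $(a+b)^2\le 2(a^2+b^2)$.
\end{itemize}
Finally, $q=\lfloor k/\ell\rfloor$.

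The only point needing care is making the assumptions match. Theorem~\ref{thm:derived-stability-rate} is stated on the finite state space $\{1,\ldots,r\}$ with a class-level contraction $\rho$ and block length $\ell$. We therefore need the block-mixing assumption of Theorem~\ref{thm:empirical-movement} to hold with $\ell_m=\ell$ and $\rho_m\le\rho$. This is guaranteed by \eqref{eq:class-block-mixing} for the selected kernel $T_{i,m}\in\mathfrak{T}_i$, and it is exactly the hypothesis stated in the corollary.

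We also need $\varepsilon_m$ in $b_{k,m,p}$ to be the same quantity bounded in \eqref{eq:derived-epsilon-rate}. This holds because both are defined with respect to the empirical human law under $\phi$.

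There is a notational clash: $r$ denotes the number of cells, while $r_k$ denotes the remainder. We keep them distinct.
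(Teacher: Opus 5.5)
Your proposal is correct and follows the paper's own proof. It substitutes $\overline{\Delta}_0\le 1$ and $\varepsilon_m\le 2c_r(m,\delta)$ from Theorem~\ref{thm:derived-stability-rate} into $b_{k,m,p}$, drops the remainder term at block endpoints, and otherwise uses $r_k\le \ell-1$ and $(1-\rho_m^q)/(1-\rho_m)\le 1/(1-\rho)$; your extra remarks (a single high-probability event, and explicit constants behind the $O(\cdot)$ statement) just spell out what the paper leaves implicit.
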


\begin{proof}
Use $\overline{\Delta}_0\leq 1$, Theorem~\ref{thm:empirical-movement}, and \eqref{eq:derived-epsilon-rate}. At a block endpoint the remainder term vanishes. For arbitrary $k$, use $r_k\leq\ell-1$ and $(1-\rho^q)/(1-\rho)\leq 1/(1-\rho)$.
\end{proof}

In \eqref{eq:derived-rate-order}, the term $\rho^{\lfloor k/\ell\rfloor}$ decays
geometrically in the number of perturbation rounds, while the term of order
$m^{-1/2}$ depends only on the human sample size.

\begin{proposition}[Human data and perturbation rounds from the derived rate] We now invert the rate in Corollary~4.5 to obtain explicit sufficient choices of the number of perturbation blocks and human samples needed to achieve a target error level $\epsilon$.
\label{prop:derived-sample-rate}
Fix $0<\epsilon<1$ and $0<\delta<1$. For $0<\rho<1$, it is sufficient to choose
\begin{align}
    q
    &\geq
    \left\lceil
        \frac{\log(2/\epsilon)}{-\log\rho}
    \right\rceil,
    \qquad
    k=q\ell,
    \label{eq:derived-k-choice}
    \\
    m
    &\geq
    \frac{
        4\ell^2\left(
            \sqrt{r}+\sqrt{2\log(1/\delta)}
        \right)^2
    }{
        (1-\rho)^2\epsilon^2
    }
    \label{eq:derived-m-choice}
\end{align}
to guarantee $b_{k,m,p}\leq\epsilon$ with probability at least $1-\delta$. If $\rho=0$, one block, $q=1$, is sufficient.
\end{proposition}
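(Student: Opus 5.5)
The plan is to start from the block-endpoint bound \eqref{eq:derived-block-b} of Corollary~\ref{cor:derived-block-rate}, which holds with probability at least $1-\delta$. Since $k=q\ell$ is chosen as a block endpoint, the remainder term vanishes, and on that event
\[
b_{q\ell,m,p}\le \rho^q+\frac{\ell\left(\sqrt{r}+\sqrt{2\log(1/\delta)}\right)}{(1-\rho)\sqrt{m}}.
\]
I would make each of the two terms at most $\epsilon/2$. Their sum is then at most $\epsilon$, on the same probability-$(1-\delta)$ event, so no extra union bound is needed.

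\textbf{The mixing term.} For $0<\rho<1$ we have $\log\rho<0$, so $\rho^q\le\epsilon/2$ is equivalent to
\[
q\log\rho\le\log(\epsilon/2),
\qquad\text{i.e.}\qquad
q\ge \frac{\log(2/\epsilon)}{-\log\rho}.
\]
Because $\epsilon<1$, the right-hand side is positive. The ceiling in \eqref{eq:derived-k-choice} therefore gives an integer $q\ge 1$ satisfying this, and any larger $q$ also works by monotonicity of $\rho^q$ in $q$.

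\textbf{The sampling term.} Requiring
\[
\frac{\ell\left(\sqrt{r}+\sqrt{2\log(1/\delta)}\right)}{(1-\rho)\sqrt{m}}\le\frac{\epsilon}{2}
\]
is equivalent to
\[
\sqrt{m}\ge \frac{2\ell\left(\sqrt{r}+\sqrt{2\log(1/\delta)}\right)}{(1-\rho)\epsilon}.
\]
Squaring both sides gives exactly \eqref{eq:derived-m-choice}, and the term is decreasing in $m$, so any larger $m$ also works.

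\textbf{The case $\rho=0$.} Here $\rho^q=0$ for every $q\ge1$, with the convention $0^0=1$ for $q=0$. So a single block $q=1$ removes the mixing term. The same $m$ condition, now with $(1-\rho)=1$, keeps the sampling term at most $\epsilon/2$, and hence at most $\epsilon$.

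\textbf{Main check.} The only real point is that Corollary~\ref{cor:derived-block-rate} uses $\overline{\Delta}_0\le1$ and $\rho_m\le\rho$, so the bound is valid uniformly. I will also confirm that the high-probability event is the one from Theorem~\ref{thm:derived-stability-rate}, which does not depend on $q$. The rest is algebra.
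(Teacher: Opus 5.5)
Your proposal is correct and follows the paper's proof: start from the block-endpoint bound \eqref{eq:derived-block-b} of Corollary~\ref{cor:derived-block-rate}, and choose $q$ and $m$ so that each of the two terms is at most $\epsilon/2$ on the same probability-$(1-\delta)$ event. Your explicit algebra fills in what the paper states in one line: the sign flip from $\log\rho<0$, the squaring step for $m$, monotonicity in $q$ and $m$, and the $\rho=0$ case.
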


\begin{proof}
The choices make the geometric and statistical terms in \eqref{eq:derived-block-b} at most $\epsilon/2$ each.
\end{proof}

Proposition~\ref{prop:derived-sample-rate} makes the two requirements explicit.
The number of perturbation rounds in \eqref{eq:derived-k-choice} grows only
logarithmically in $1/\epsilon$, while the number of human samples in
\eqref{eq:derived-m-choice} grows like $1/\epsilon^2$. The human sample size
therefore governs how small $\epsilon$ can be made.

\section{Discussion}

Our results reframe paraphrasing attacks as a question about data. Prior work shows that recursive paraphrasing weakens detectors, but an unguided paraphraser drifts toward its own style rather than any particular person's. Access to human examples changes this. Theorem~\ref{thm: empirical convergence} shows convergence up to an error determined by how stable the human distribution is under the paraphraser, while Theorem~\ref{thm:derived-stability-rate} shows that this stability improves as more human examples become available.

Our analysis has two main limitations. First, the finite-sample guarantees are stated under a finite representation $\phi$, so matching a coarse representation may still leave differences that a detector can exploit. Our theory requires the paraphrasing process to satisfy the block-mixing condition, but we do not empirically verify this property for practical paraphrasers.

These limitations suggest several directions for future work. A more refined theory could introduce a parameter that captures variation within the human distribution and study how this changes the required sample size. It would also be useful to develop practical training procedures for paraphrasers that use human examples while satisfying conditions assumed here.

\bibliographystyle{unsrt}
\bibliography{references}

@inproceedings{gehrmann2019gltr,
  title     = {{GLTR}: Statistical Detection and Visualization of Generated Text},
  author    = {Gehrmann, Sebastian and Strobelt, Hendrik and Rush, Alexander},
  booktitle = {Proceedings of the 57th Annual Meeting of the Association for Computational Linguistics: System Demonstrations},
  pages     = {111--116},
  year      = {2019},
  publisher = {Association for Computational Linguistics},
  doi       = {10.18653/v1/P19-3019}
}

@inproceedings{verma2024ghostbuster,
  title     = {Ghostbuster: Detecting Text Ghostwritten by Large Language Models},
  author    = {Verma, Vivek and Fleisig, Eve and Tomlin, Nicholas and Klein, Dan},
  booktitle = {Proceedings of the 2024 Conference of the North American Chapter of the Association for Computational Linguistics: Human Language Technologies},
  pages     = {1702--1717},
  year      = {2024},
  publisher = {Association for Computational Linguistics},
  doi       = {10.18653/v1/2024.naacl-long.95}
}

@inproceedings{mitchell2023detectgpt,
  title     = {{DetectGPT}: Zero-Shot Machine-Generated Text Detection using Probability Curvature},
  author    = {Mitchell, Eric and Lee, Yoonho and Khazatsky, Alexander and Manning, Christopher D. and Finn, Chelsea},
  booktitle = {Proceedings of the 40th International Conference on Machine Learning},
  series    = {Proceedings of Machine Learning Research},
  volume    = {202},
  pages     = {24950--24962},
  year      = {2023},
  publisher = {PMLR}
}

@inproceedings{bao2024fastdetectgpt,
  title     = {{Fast-DetectGPT}: Efficient Zero-Shot Detection of Machine-Generated Text via Conditional Probability Curvature},
  author    = {Bao, Guangsheng and Zhao, Yanbin and Teng, Zhiyang and Yang, Linyi and Zhang, Yue},
  booktitle = {The Twelfth International Conference on Learning Representations},
  year      = {2024},
  url       = {https://openreview.net/forum?id=Bpcgcr8E8Z}
}

@inproceedings{kirchenbauer2023watermark,
  title     = {A Watermark for Large Language Models},
  author    = {Kirchenbauer, John and Geiping, Jonas and Wen, Yuxin and Katz, Jonathan and Miers, Ian and Goldstein, Tom},
  booktitle = {Proceedings of the 40th International Conference on Machine Learning},
  series    = {Proceedings of Machine Learning Research},
  volume    = {202},
  pages     = {17061--17084},
  year      = {2023},
  publisher = {PMLR}
}

@inproceedings{dugan2024raid,
  title     = {{RAID}: A Shared Benchmark for Robust Evaluation of Machine-Generated Text Detectors},
  author    = {Dugan, Liam and Hwang, Alyssa and Trhl{\'i}k, Filip and Zhu, Andrew and Ludan, Josh Magnus and Xu, Hainiu and Ippolito, Daphne and Callison-Burch, Chris},
  booktitle = {Proceedings of the 62nd Annual Meeting of the Association for Computational Linguistics},
  pages     = {12463--12492},
  year      = {2024},
  publisher = {Association for Computational Linguistics},
  doi       = {10.18653/v1/2024.acl-long.674}
}

@article{weberwulff2023testing,
  title   = {Testing of Detection Tools for {AI}-Generated Text},
  author  = {Weber-Wulff, Debora and Anohina-Naumeca, Alla and Bjelobaba, Sonja and Folt{\'y}nek, Tom{\'a}{\v{s}} and Guerrero-Dib, Jean and Popoola, Olumide and {\v{S}}igut, Petr and Waddington, Lorna},
  journal = {International Journal for Educational Integrity},
  volume  = {19},
  number  = {1},
  pages   = {26},
  year    = {2023},
  doi     = {10.1007/s40979-023-00146-z}
}

@inproceedings{varshney2020limits,
  title     = {Limits of Detecting Text Generated by Large-Scale Language Models},
  author    = {Varshney, Lav R. and Keskar, Nitish Shirish and Socher, Richard},
  booktitle = {2020 Information Theory and Applications Workshop},
  pages     = {1--5},
  year      = {2020},
  publisher = {IEEE},
  doi       = {10.1109/ITA50056.2020.9245012}
}

@article{sadasivan2023can,
  title   = {Can {AI}-Generated Text be Reliably Detected?},
  author  = {Sadasivan, Vinu Sankar and Kumar, Aounon and Balasubramanian, Sriram and Wang, Wenxiao and Feizi, Soheil},
  journal = {arXiv preprint arXiv:2303.11156},
  year    = {2023}
}

@inproceedings{chakraborty2024possibilities,
  title     = {Position: On the Possibilities of {AI}-Generated Text Detection},
  author    = {Chakraborty, Souradip and Bedi, Amrit and Zhu, Sicheng and An, Bang and Manocha, Dinesh and Huang, Furong},
  booktitle = {Proceedings of the 41st International Conference on Machine Learning},
  series    = {Proceedings of Machine Learning Research},
  volume    = {235},
  pages     = {6093--6115},
  year      = {2024},
  publisher = {PMLR}
}

@inproceedings{krishna2023paraphrasing,
  title     = {Paraphrasing Evades Detectors of {AI}-Generated Text, but Retrieval is an Effective Defense},
  author    = {Krishna, Kalpesh and Song, Yixiao and Karpinska, Marzena and Wieting, John and Iyyer, Mohit},
  booktitle = {Advances in Neural Information Processing Systems},
  volume    = {36},
  year      = {2023},
  doi       = {10.52202/075280-1195}
}

@article{shi2024red,
  title   = {Red Teaming Language Model Detectors with Language Models},
  author  = {Shi, Zhouxing and Wang, Yihan and Yin, Fan and Chen, Xiangning and Chang, Kai-Wei and Hsieh, Cho-Jui},
  journal = {Transactions of the Association for Computational Linguistics},
  volume  = {12},
  pages   = {174--189},
  year    = {2024},
  doi     = {10.1162/tacl_a_00639}
}

@inproceedings{nicks2024language,
  title     = {Language Model Detectors Are Easily Optimized Against},
  author    = {Nicks, Charlotte and Mitchell, Eric and Rafailov, Rafael and Sharma, Archit and Manning, Christopher D. and Finn, Chelsea and Ermon, Stefano},
  booktitle = {The Twelfth International Conference on Learning Representations},
  year      = {2024},
  url       = {https://openreview.net/forum?id=4eJDMjYZZG}
}

@inproceedings{cheng2025adversarial,
  title     = {Adversarial Paraphrasing: A Universal Attack for Humanizing {AI}-Generated Text},
  author    = {Cheng, Yize and Sadasivan, Vinu Sankar and Saberi, Mehrdad and Saha, Shoumik and Feizi, Soheil},
  booktitle = {Advances in Neural Information Processing Systems},
  volume    = {38},
  year      = {2025}
}

@article{masrour2025damage,
  title   = {{DAMAGE}: Detecting Adversarially Modified {AI} Generated Text},
  author  = {Masrour, Elyas and Emi, Bradley and Spero, Max},
  journal = {arXiv preprint arXiv:2501.03437},
  year    = {2025}
}

@article{xu2026base,
  title   = {Base Models Look Human To {AI} Detectors},
  author  = {Xu, Yixuan Even and Zhong, Ziqian and Raghunathan, Aditi and Fang, Fei and Kolter, J. Zico},
  journal = {arXiv preprint arXiv:2605.19516},
  year    = {2026}
}

@inproceedings{brennan2009practical,
  title     = {Practical Attacks Against Authorship Recognition Techniques},
  author    = {Brennan, Michael Robert and Greenstadt, Rachel},
  booktitle = {Proceedings of the Twenty-First Conference on Innovative Applications of Artificial Intelligence},
  pages     = {60--65},
  year      = {2009},
  publisher = {AAAI Press}
}

@inproceedings{bevendorff2019heuristic,
  title     = {Heuristic Authorship Obfuscation},
  author    = {Bevendorff, Janek and Potthast, Martin and Hagen, Matthias and Stein, Benno},
  booktitle = {Proceedings of the 57th Annual Meeting of the Association for Computational Linguistics},
  pages     = {1098--1108},
  year      = {2019},
  publisher = {Association for Computational Linguistics},
  doi       = {10.18653/v1/P19-1104}
}

@article{grondahl2020effective,
  title   = {Effective Writing Style Transfer via Combinatorial Paraphrasing},
  author  = {Gr{\"o}ndahl, Tommi and Asokan, N.},
  journal = {Proceedings on Privacy Enhancing Technologies},
  volume  = {2020},
  number  = {4},
  pages   = {175--195},
  year    = {2020},
  doi     = {10.2478/popets-2020-0068}
}

@article{liu2024authorship,
  title   = {Authorship Style Transfer with Policy Optimization},
  author  = {Liu, Shuai and Agarwal, Shantanu and May, Jonathan},
  journal = {arXiv preprint arXiv:2403.08043},
  year    = {2024}
}

@inproceedings{tripto2023ship,
  title     = {A Ship of Theseus: Curious Cases of Paraphrasing in {LLM}-Generated Texts},
  author    = {Tripto, Nafis Irtiza and Venkatraman, Saranya and Macko, Dominik and Moro, Robert and Srba, Ivan and Uchendu, Adaku and Le, Thai and Lee, Dongwon},
  booktitle = {Proceedings of the 62nd Annual Meeting of the Association for Computational Linguistics},
  pages     = {6608--6625},
  year      = {2024},
  publisher = {Association for Computational Linguistics},
  doi       = {10.18653/v1/2024.acl-long.357}
}

@article{wang2025attractor,
  title   = {Unveiling Attractor Cycles in Large Language Models: A Dynamical Systems View of Successive Paraphrasing},
  author  = {Wang, Zhilin and Li, Yafu and Yan, Jianhao and Cheng, Yu and Zhang, Yue},
  journal = {arXiv preprint arXiv:2502.15208},
  year    = {2025}
}

@article{geng2026markovian,
  title   = {Markovian Generation Chains in Large Language Models},
  author  = {Geng, Mingmeng and Mohamed, Amr and Shang, Guokan and Vazirgiannis, Michalis and Poibeau, Thierry},
  journal = {arXiv preprint arXiv:2603.11228},
  year    = {2026}
}

@inproceedings{zhang2024watermarks,
  title     = {Watermarks in the Sand: Impossibility of Strong Watermarking for Language Models},
  author    = {Zhang, Hanlin and Edelman, Benjamin L. and Francati, Danilo and Venturi, Daniele and Ateniese, Giuseppe and Barak, Boaz},
  booktitle = {Proceedings of the 41st International Conference on Machine Learning},
  series    = {Proceedings of Machine Learning Research},
  volume    = {235},
  pages     = {58851--58880},
  year      = {2024},
  publisher = {PMLR}
}

@article{rosenthal1995minorization,
  title   = {Minorization Conditions and Convergence Rates for Markov Chain Monte Carlo},
  author  = {Rosenthal, Jeffrey S.},
  journal = {Journal of the American Statistical Association},
  volume  = {90},
  number  = {430},
  pages   = {558--566},
  year    = {1995},
  doi     = {10.1080/01621459.1995.10476548}
}

@article{johnston2024student,
  title   = {Student Perspectives on the Use of Generative Artificial Intelligence Technologies in Higher Education},
  author  = {Johnston, Heather and Wells, Rebecca F. and Shanks, Elizabeth M. and Boey, Timothy and Parsons, Bryony N.},
  journal = {International Journal for Educational Integrity},
  volume  = {20},
  number  = {1},
  pages   = {2},
  year    = {2024},
  doi     = {10.1007/s40979-024-00149-4}
}

@article{yeadon2024evaluating,
  title   = {Evaluating {AI} and Human Authorship Quality in Academic Writing through Physics Essays},
  author  = {Yeadon, Will and Agra, Elise and Inyang, Oto-Obong and Mackay, Paul and Mizouri, Arin},
  journal = {European Journal of Physics},
  volume  = {45},
  number  = {5},
  pages   = {055703},
  year    = {2024},
  doi     = {10.1088/1361-6404/ad669d}
}

\end{document}